\newtheorem{proposition}{Proposition}
\begin{document}

\title{Efficient Heuristics Generation for Solving Combinatorial Optimization Problems Using Large Language Models}

\author{Xuan Wu}
\affiliation{%
  \institution{College of Computer Science and Technology, Jilin University}
  \city{Changchun}
  \state{Jilin}
  \country{China}
}
\email{wuuu22@mails.jlu.edu.cn}

\author{Di Wang}
\affiliation{%
  \institution{LILY Research Centre, Nanyang Technological University}
  \country{Singapore}
}
\email{wangdi@ntu.edu.sg}

\author{Chunguo Wu}
\affiliation{%
  \institution{Key Laboratory of
 Symbolic Computation and Knowledge Engineering of Ministry of Education, Jilin University}
  \city{Changchun}
  \state{Jilin}
  \country{China}
}\email{wucg@jlu.edu.cn}

\author{Lijie Wen}
\affiliation{%
  \institution{School of Software, Tsinghua University}
  \city{Beijing}
  \country{China}
}\email{wenlj@tsinghua.edu.cn}

\author{Chunyan Miao}
\affiliation{%
  \institution{LILY Research Centre, Nanyang Technological University}
  \country{Singapore}
}\email{ascymiao@ntu.edu.sg}

\author{Yubin Xiao}
\authornote{Corresponding Authors.}
\affiliation{%
  \institution{College of Computer Science and Technology, Jilin University}
  \city{Changchun}
  \state{Jilin}
  \country{China}
}
\email{xiaoyb21@mails.jlu.edu.cn}

\author{You Zhou}
\authornotemark[1]
\affiliation{%
  \institution{College of Software, Jilin University}
  \city{Changchun}
  \state{Jilin}
  \country{China}
}\email{zyou@jlu.edu.cn}
\renewcommand{\shortauthors}{Xuan Wu et al.}

\begin{abstract}
Recent studies exploited Large Language Models (LLMs) to autonomously generate heuristics for solving Combinatorial Optimization Problems (COPs), by prompting LLMs to first provide search directions and then derive heuristics accordingly. However, the absence of task-specific knowledge in prompts often leads LLMs to provide unspecific search directions, obstructing the derivation of well-performing heuristics. Moreover, evaluating the derived heuristics remains resource-intensive, especially for those semantically equivalent ones, often requiring omissible resource expenditure. To enable LLMs to provide specific search directions, we propose the Hercules algorithm, which leverages our designed Core Abstraction Prompting (CAP) method to abstract the core components from elite heuristics and incorporate them as prior knowledge in prompts. We theoretically prove the effectiveness of CAP in reducing unspecificity and provide empirical results in this work. To reduce computing resources required for evaluating the derived heuristics, we propose few-shot Performance Prediction Prompting (PPP), a first-of-its-kind method for the Heuristic Generation (HG) task. PPP leverages LLMs to predict the fitness values of newly derived heuristics by analyzing their semantic similarity to previously evaluated ones. We further develop two tailored mechanisms for PPP to enhance predictive accuracy and determine unreliable predictions, respectively. The use of PPP makes Hercules more resource-efficient and we name this variant Hercules-P. Extensive experiments across four HG tasks, five COPs, and eight LLMs demonstrate that Hercules outperforms the state-of-the-art LLM-based HG algorithms, while Hercules-P excels at minimizing required computing resources. In addition, we illustrate the effectiveness of CAP, PPP, and the other proposed mechanisms by conducting relevant ablation studies.
\end{abstract}

\begin{CCSXML}
<ccs2012>
   <concept>
       <concept_id>10002950.10003624.10003625.10003630</concept_id>
       <concept_desc>Mathematics of computing~Combinatorial optimization</concept_desc>
       <concept_significance>500</concept_significance>
       </concept>
   <concept>
       <concept_id>10010147.10010178.10010205</concept_id>
       <concept_desc>Computing methodologies~Search methodologies</concept_desc>
       <concept_significance>500</concept_significance>
       </concept>
 </ccs2012>
\end{CCSXML}

\ccsdesc[500]{Mathematics of computing~Combinatorial optimization}
\ccsdesc[500]{Computing methodologies~Search methodologies}

\keywords{Large Language Models; Heuristic Generation; Combinatorial Optimization Problems}
\maketitle
\vspace{-0.3cm}
\newcommand\kddavailabilityurl{https://doi.org/10.5281/zenodo.15462797}

\ifdefempty{\kddavailabilityurl}{}{
\begingroup\small\noindent\raggedright\textbf{KDD Availability Link:}\\
The source code of this paper has been made publicly available at \url{\kddavailabilityurl}.
\endgroup
}
\vspace{-0.43cm}
\section{Introduction}
Heuristic algorithms have long been a preferred approach for solving Combinatorial Optimization Problems (COPs) \cite{REGO2011427, wu_incorporating_2023}. To automate the derivation of heuristics for a given COP, Heuristic Generation (HG) methods have attracted significant attention \cite{burke_hyper_2013}. Early HG methods predominantly employ Evolutionary Computation (EC) to derive heuristics. However, these methods focus on the exploration and exploitation in the micro search space composed of the predefined modules, resulting in limited performance \citep{ye2024reevo}.

Recently, the emergence of Large Language Models (LLMs) has facilitated the autonomous derivation of heuristics, eliminating the need for manually defining the search space  \citep{liu2023algorithm, liu2024evolution, vanstein2024llamea}. In addition, compared to conventional EC algorithms, LLMs benefit from a broader search space by leveraging their mega-size training corpora, resulting in elevated performance \citep{yang2024large, ma2024llam, liu2024large}. Specifically, these LLM-based HG methods exploit LLMs to provide search directions, which are then used to derive (novel) offspring heuristics \citep{romera_mathematical_2024}. These produced heuristics are subsequently evaluated using COP instances to determine their fitness values, with the better-performing heuristics carried over to the next iteration. For example, \citet{liu2023algorithm} proposed prompting methods that emulate crossover and mutation operators as search strategies, thereby implicitly providing search directions. To let LLMs offer more explicit search directions, \citet{ye2024reevo} proposed Reflection Prompting (RP), which requires LLMs to reflect on the relative performance of the produced heuristics and provide insights as search directions. These directions are then used to derive heuristics with expected elevated performance in subsequent crossover and mutation promptings.

\begin{figure}[!t]
\centering
\includegraphics[scale=0.45]{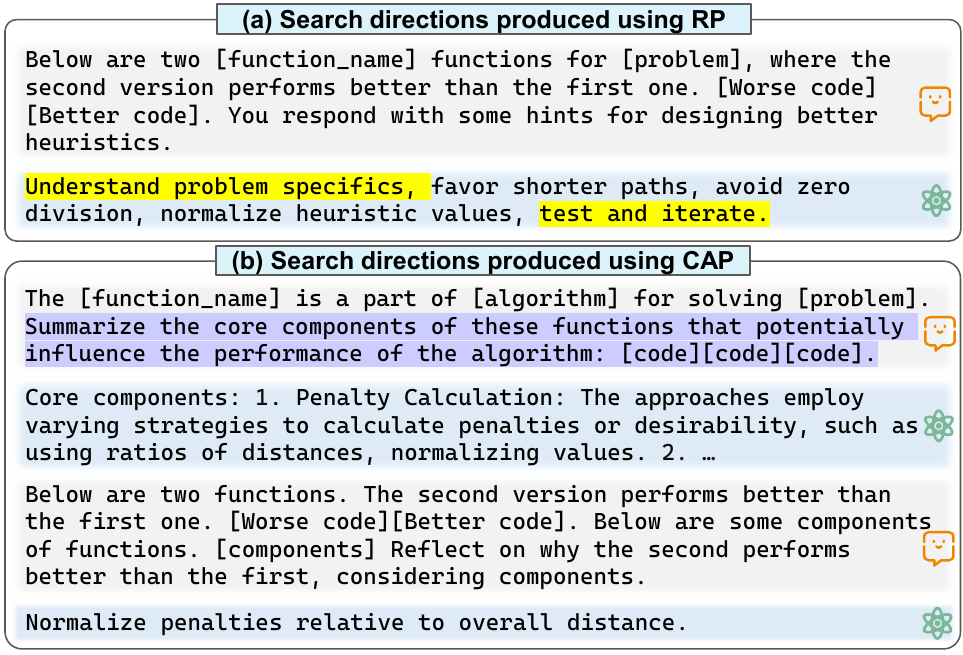}
	\caption{Illustration of the search directions produced using RP \citep{ye2024reevo} and CAP (our method) for the task described in Section~\ref{4.1}. When RP prompts LLMs (GPT-4o-mini used in this example) for search directions directly, the LLMs may respond with unspecific search directions (highlighted in yellow). Different from RP, CAP enhances the quality of the produced search directions by first prompting the LLMs to abstract the core components of elite heuristics as prior knowledge in a zero-shot manner (highlighted in purple).} 
	\label{figcap}
    \vspace{-0.8cm}
\end{figure}

These existing LLM-based HG methods face two key challenges. Firstly, when prompting LLMs to provide search directions (e.g., reflections on the relative performance of heuristics), the lack of task-specific knowledge in prompts often leads to over-generalized, unspecific directions that hinder the derivation of high-performance heuristics. As illustrated in Figure~\ref{figcap}(a), the produced search directions \textit{``Understand problem specifics"} and \textit{``test and iterate"} are vague, over-general, and lack actionable steps required for heuristic generation. Consequently, they contribute little to the derivation of high-performance heuristics. In contrast, other elements of the produced search directions are more specific. For example, \textit{``normalize heuristic values"} provides an actionable step that can be directly applied to derive heuristics. Therefore, it is essential to reduce unspecificity in the produced search directions. Secondly, during the search process, LLM-based HG methods often derive numerous heuristics, some of which may be semantically or even literally identical, as illustrated in Figure~\ref{code}. Reevaluating these heuristics using COP instances (i.e., conventional fitness evaluation method) not only wastes computing resources but also significantly prolongs the search process \citep{chen2024large}. In particular, these heuristics often involve numerous linear operations and conditional branches, which GPUs cannot efficiently accelerate \citep{7886331}. In addition, providing LLMs with all historical heuristics to avoid deriving semantically similar ones is impractical. This approach may compel LLMs to derive overly random or unviable heuristics, while significantly increasing the cost of context tokens.
\begin{figure}[!t]
\centering
\includegraphics[scale=0.45]{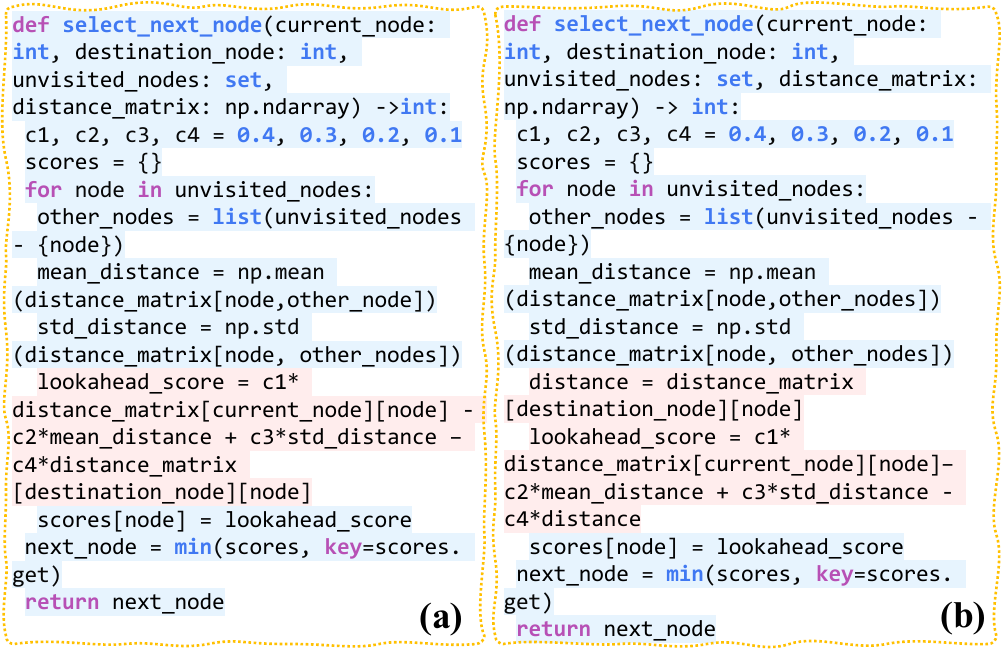}
	\caption{Illustration of two heuristics with identical semantics, produced by RP \citep{ye2024reevo} (GPT-3.5-turbo used in this example) for the task described in Section~\ref{4.2}. Code snippets with literal equivalence are highlighted in blue, while those with semantic equivalence are highlighted in pink.} 
	\label{code}
    \vspace{-0.8cm}
\end{figure}

To better address the first challenge, we propose \textbf{He}u\textbf{r}isti\textbf{c} Generation \textbf{U}sing \textbf{L}arge Languag\textbf{e} Model\textbf{s} (\textbf{Hercules}), which exploits our proprietary, straightforward yet effective Core Abstraction Prompting (\textbf{CAP}) method to reduce unspecificity in the produced search directions and thus enable the derivation of high-performance heuristics. Specifically, CAP directs an LLM to abstract the core components from the top-$k$ heuristics (i,e., elite heuristics) in the current population and then provide more specific search directions based on these components  (see Section~\ref{3.1}). Notably, as illustrated in Figure~\ref{figcap}(b), CAP operates in a zero-shot manner, abstracting the core components without providing any examples to guide this abstraction process, which leads to significant savings in context token costs. Meanwhile, by incorporating the concept of information gain, we theoretically prove that CAP can reduce unspecificity in the produced search directions in Appendix~\ref{appendixA}. To couple with CAP, we introduce a rank-based selection mechanism that increases the likelihood of selecting high-performance heuristics as parents (used in the following crossover and mutation promptings), rather than relying on random selection \citep{ye2024reevo}.

To better address the second challenge, we propose \mbox{ \textbf{Hercules-P}}, which integrates CAP with our novel Performance Prediction Prompting (\textbf{PPP}) method. PPP operates in a few-shot manner by presenting LLMs with a small set of previously evaluated heuristics as examples and prompting LLMs to predict the fitness values of the newly produced heuristics based on their semantic similarity to the presented examples (see Section~\ref{3.2}). Therefore, PPP reduces the number of heuristics that require evaluation using COP instances. Generally speaking, to enhance the predictive accuracy of PPP, we can either increase the number of examples or enhance their quality. However, collecting numerous heuristic examples along with their corresponding performance is resource-intensive. This contradicts to the primary purpose of incorporating PPP, which is to reduce resource expenditure during the search process. Moreover, unlike Neural Architecture Search (NAS), which benefits from extensive benchmarks \citep{ying_19_bench,QIU2023110671,ecgp}, the HG task lacks benchmarks with pre-evaluated heuristics. Therefore, we opt to provide higher-quality examples through a tailored example selection mechanism, termed EXEMPLAR, which favors distinct parent heuristics with superior performance as examples. Meanwhile, to determine unreliable predictions, we develop the Confidence Stratification (ConS) mechanism that requires the LLM to provide confidence levels for the predicted fitness values, thereby facilitating the identification of heuristics that need reevaluation. In summary, PPP reduces the resource expenditure in heuristic evaluations while maintaining population diversity, making it effective for tasks with a border search space. To the best of our knowledge, \textbf{our work proposes the first LLM-based performance predictor for the HG task.}

To assess the performance of the proposed Hercules and Hercules-P algorithms, we conduct extensive experiments on four HG tasks (see Section~\ref{4}). The experimental results demonstrate that Hercules outperforms the state-of-the-art (SOTA) LLM-based HG algorithms across four HG tasks, five COPs, and eight LLMs, without significantly increasing context or generation token costs. By incorporating PPP, Hercules-P significantly reduces the overall search time by 7\%$\sim$59\% when compared to Hercules, while achieving on-par performance on the gain metric. Finally, ablation studies validate the effectiveness of the proposed EXEMPLAR and ConS methods.

The key contributions of this work are as follows.

\textbf{i)} We propose the zero-shot CAP method, which reduces unspecificity in the LLM-produced search directions, enabling the derivation of high-performance heuristics. We also provide theoretical proof of CAP's effectiveness in reducing unspecificity by utilizing the concept of information gain.

\textbf{ii)} We propose the few-shot PPP method, a first-of-its-kind  LLM-based performance predictor specifically designed for HG tasks. PPP predicts the performance of newly produced heuristics by analyzing their semantic similarity to previously evaluated ones. Moreover, we develop two novel mechanisms: EXEMPLAR and ConS, which significantly enhance the overall performance of PPP.

\textbf{iii)} The experimental results demonstrate that our proposed Hercules achieves SOTA performance across four HG tasks, five COPs, and eight LLMs, while Hercules-P excels at reducing resource expenditure. Finally, ablation study results validate the effectiveness of all proposed methods.
\begin{figure*}[!t]
\centering
	\includegraphics[scale=0.9]{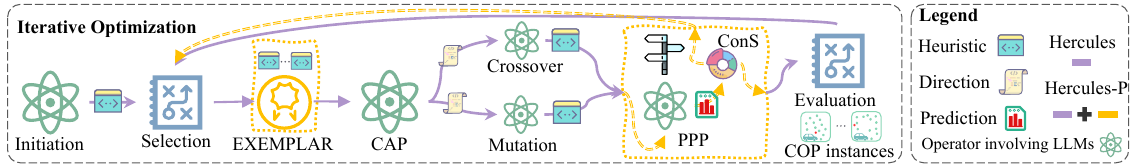}
	\caption{Overview of the proposed Hercules and Hercules-P algorithms. Hercules exploits CAP to provide specific search directions, which are then used to guide LLMs in deriving high-performance heuristics. In Hercules, the performance of all derived heuristics on a set of COP instances determines their respective fitness values. In contrast, Hercules-P evaluates only a subset of the produced heuristics with COP instances, while the rest are assessed using the proposed PPP method.} 
	\label{figframework}
        \vspace{-0.5cm}
\end{figure*}
\vspace{-0.2cm}
\section{Related Work}
In this section, we review the relevant literature.
\vspace{-0.4cm}
\subsection{LLM-based Heuristic Generation Algorithms}
\label{2.1}
Conventional EC-based HG algorithms search for the optimal combination of the predefined heuristic modules \citep{Keller_Linear_2007}, which often limits their performance. In contrast, LLM-based HG algorithms eliminate the need for predefining the search space, liberating researchers from manual customization and enabling the derivation of high-performance heuristics \citep{ wu2024evolutionary, huang2024large}. Specifically, these algorithms begin with a seed heuristic to prompt LLMs to derive multiple heuristics as the initial population \citep{liu2023algorithm, liu2024evolution, ye2024reevo}. Each heuristic is then evaluated using a set of COP instances, with its performance serving as its fitness value. During the iterative process, certain heuristics are selected as parents and presented to LLMs to derive (novel) offspring heuristics. This approach emulates the concepts of crossover and mutation, while implicitly providing search directions for the LLMs to derive heuristics. In addition, certain studies exploit LLMs to provide explicit search directions for deriving well-performing heuristics \citep{ye2024reevo}. However, these LLM-based HG algorithms overlook the issue of unspecificity in LLM responses (see Figure~\ref{figcap}(a)), which can lead to unspecific search directions that do not contribute to discovering high-performance heuristics.

Similar challenges are observed in tasks such as arithmetic and symbolic reasoning, making it crucial to evoke LLM reasoning through a multi-step process and incorporate task-specific knowledge \citep{yu2024thought, jiang2024llm, lv2024coarsetofine}. For example, \citet{wei2022chain} proposed Chain-of-Thought (CoT) prompting, which directs LLMs to emulate the given examples in completing a multi-step solution process, leading to more accurate answers. Subsequently, \citet{zheng2024take} proposed the few-shot Step-back Prompting (SP), which exploits the given examples to enable LLMs to abstract high-level principles and then apply these principles in reasoning. In a similar multi-step fashion, we propose CAP to mitigate unspecificity in the produced search directions for better solving HG tasks. However, unlike CoT and SP, CAP operates in a zero-shot manner, by abstracting the core components without any examples to guide the abstraction process.

\vspace{-0.2cm}
\subsection{LLM-based Performance Prediction Methods}
\label{2.2}
In the field of NAS, performance predictors, typically Deep Neural Networks, are widely used to reduce search costs by predicting the performance of candidate architectures \citep{baker2017designing, wu_weaker}. These predictors model neural architectures as graphs, where nodes represent subnets and edges represent the connections between subnets \citep{Chu_2023_ICCV, NEURIPS2022_572aaddf}. The graphs are then encoded into vectors, and the mapping between these vectors and the corresponding performance metrics is learned. Recently,  \citet{chen2024large} and \citet{jawahar2024llm} proposed LLM-based predictors for predicting the performance of neural architectures. Specifically, they employed examples of architectures and corresponding performance metrics to prompt LLMs, leveraging semantic similarity to predict the performance of newly searched architectures.

In the context of HG, conventional performance predictors may struggle to accurately evaluate heuristics due to the difficulty in modeling these diverse and complex heuristics as graph structures. However, the LLM-based predictor presents a promising alternative by eliminating the need for explicit heuristic modeling. Consequently, this paper leverages LLMs to predict the performance of heuristics for effectively solving HG tasks. However, unlike \citep{chen2024large}  and \citep{jawahar2024llm}, which relied on a larger number of examples, our PPP emphasizes the use of only the higher-quality examples to improve predictive performance (see Section~\ref{3.2} for more details). 
\vspace{-0.3cm}
\subsection{Neural Combinatorial Optimization Solvers}
\label{2.3}
Neural Combinatorial Optimization (NCO) refers to a class of Neural Network solvers that either independently solve COPs or collaborate with heuristic algorithms \citep{bengio_machine_2021,9393606, wu_2024_survey,bogyrbayeva_learning_2022}. To enable the derivation of insights from historical COP instances and efficiently handle batches of instances in parallel, researchers have recently developed numerous NCO solvers \citep{liufeimulti, xiao_distilling_2024}. However, these NCO solvers still face several challenges. Two of the most prominent ones are how to improve their generalization capabilities \citep{zhou_towards_2023,xiao2024improving} and their performance on large-scale COPs \citep{hou_generalize_2023, sun_difusco_2023,  mingzhao}. Recently, \citet{wang2024distance} proposed a distance-aware heuristic algorithm designed to enhance the generalization ability of NCO solvers trained on small-scale COPs for solving large-scale COPs. To assess the effectiveness of the proposed Hercules and Hercules-P algorithms, we apply them to improve the performance of two classic NCO solvers on both small-scale and large-scale COPs in Section~\ref{4.4}.

\section{Hercules and Hercules-P}

The illustrations of Hercules and Hercules-P are schematically presented in Figure~\ref{figframework}. In this section, we first introduce CAP, which is designed to provide more specific search directions for deriving heuristics. We then prove that CAP can reduce unspecificity of the produced search directions. Finally, we present the design of PPP, along with tailored EXEMPLAR and ConS mechanisms.
\vspace{-0.2cm}
\subsection{Core Abstraction Prompting (CAP)}
\label{3.1}
As aforementioned, when LLMs are tasked with providing search directions, they often generate directions that lack specificity for heuristic derivation. As illustrated in the RP example in Figure~\ref{figcap}(a), certain directions, such as \textit{``Understand problem specifics"} and \textit{``test and iterate"}, lack relevance to heuristic derivation and fail to derive well-performing heuristics.

In this case and many others, providing prior knowledge in prompts can help LLMs reduce unspecificity in their responses, leading to more focused, specific search directions. To achieve this, we propose the zero-shot CAP method, which can abstract the core components from the top-$k$ heuristics in the current population without additional guidance. Because the core components are essential for heuristic performance \citep{7339682, liu2024evolution}, leveraging them enables LLMs to provide more specific search directions. As shown in Figure~\ref{figcap}(b), the suggested direction \textit{``Normalize penalties relative to overall distance"} may lead to more effective heuristic generation. In addition, CAP abstracts the core components once per iteration, instead of abstracting distinct components separately for crossover and elitist mutation operators. Consequently, this approach helps prevent a significant increase in context and generation token costs compared to RP (see Table~\ref{tabsearchreport} in the experiment section). 


In the field of information theory, the advantage of CAP can be quantified using the concept of information gain. \citet{hu2024uncertainty} defined information gain as the reduction in entropy between two states. Extending this concept, we use information gain to quantify entropy reduction in scenarios with and without abstraction, facilitating the assessment of CAP in reducing unspecificity. Specifically, the entropy without abstraction (i.e., the core components are not presented to LLMs) in the $t$th iteration is defined as follows:
\begin{equation}
H(\Omega_t) = -\sum\nolimits_{i: \omega_i \in \Omega_t} p(\omega_i | \Omega_t) \log p(\omega_i | \Omega_t),
\end{equation}
where $\omega_i$ denotes a direction belonging to the set of all possible directions $\Omega_t$.

When the core components are used as prior knowledge in prompts, an LLM can provide more specific, subdivided search directions either based on one of these core components or disregarding all core components. Consequently, the set of all possible directions, $\Omega_t$, can be partitioned into mutually exclusive subsets, $\Omega_j$, where $\bigcup_{j=0}^{k} \Omega_j = \Omega_t$. Here, when $j \in \{ 0, 1, \dots, k-1\}$, $\Omega_j$ denotes the subset of directions associated with the $j$th core component (for simplicity, we assume a one-to-one correspondence between core components and heuristics), while $j=k$ corresponds to the subset of directions independent of any core component.

Assuming that the produced direction belongs to the $j$th subset ($j \in \{ 0,1, \dots,k\}$) after providing the core components, the remaining entropy is defined as follows:
\begin{equation}
 H(\Omega_j) = -\sum\nolimits_{i: \omega_i \in \Omega_j} p(\omega_i | \Omega_j) \log p(\omega_i | \Omega_j).
\end{equation}
Then, the entropy with abstraction (i.e., the expected remaining entropy) is defined as $\sum\nolimits_{j=0}^{k} p_j H(\Omega_j)$, where $p_j$ denotes the probability that the search direction belongs to the $j$th subset, i.e., $p_j = p(\Omega_j) / p(\Omega_t)$. Thus, the information gain from abstracting the core components in the $t$th iteration (the entropy reduction without and with abstraction) is defined as follows:
\begin{equation}
    IG(\Omega_t) = H(\Omega_t) - \sum\nolimits_{j=0}^{k} p_j H(\Omega_j).
    \label{eq}
\end{equation}

As proven in Appendix~\ref{appendixA}, Eq. (\ref{eq}) simplifies to the following expression, whose value ranges within the $(0, \log{(k+1)}]$ interval:
\begin{align}
    IG(\Omega_t) =  - \sum\nolimits_{j=0}^{k} p_j \log p_j.
\end{align}
Therefore, in theory, providing the core components as prior knowledge in prompts can reduce unspecificity in LLM responses and yield more specific search directions, subsequently leading to heuristics with higher performance.

To fine-search the space with high-quality heuristics, we adopt a rank-based selection mechanism. Specifically, the probability of selecting the $i$th heuristic as a parent is computed as follows:
\begin{equation}
p(x_i) = \frac{1}{\text{rank}(x_i) + N} \bigg/\sum\nolimits_{j=1}^{N} \frac{1}{\text{rank}(x_j) + N},
\label{rank}
\end{equation} 
where $N$ denotes the population size, and $\text{rank}(\cdot)$ returns the rank of the associated fitness value in the ascending order. In addition, Hercules adopts the core components of the \mbox{top-$k$} heuristics as prior knowledge during the first $\lambda$ percent of iterations ($\lambda \in$ [0,1]). In the later iterations, following \citep{4812104,9344816}, to better preserve population diversity, Hercules directly applies the core components of the parent heuristics as prior knowledge to provide search directions, bypassing the abstraction process of elite heuristics.


\vspace{-0.2cm}
\subsection{Performance Prediction Prompting (PPP)}
\label{3.2}

\begin{figure}[!t]
\centering\includegraphics[scale=0.52]{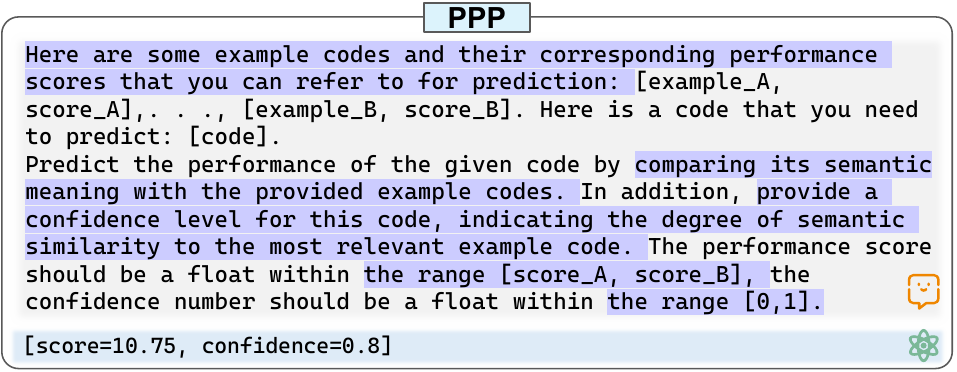}
	\caption{Illustration of the prediction process using the proposed PPP method. By analyzing the semantic similarity between the heuristics to be predicted and the previously evaluated ones, LLMs can respond with a performance score for each heuristic  with an associated confidence level.} 
	\label{figppp}
        \vspace{-0.6cm}
\end{figure}
Semantic features have demonstrated significant merits in software engineering tasks, e.g., identifying the defective code regions \citep{LIU2023111753}, due to their influence on the overall code performance. Motivated by this concept, we propose the few-shot PPP method, which leverages LLMs to predict the performance of newly produced heuristics by analyzing their semantic similarity to previously evaluated ones, as shown in Figure~\ref{figppp}. To achieve higher predictive accuracy with a small number of $N_e$ examples, we propose an example selection mechanism called EXEMPLAR, which operates on a principle similar to providing a more relevant, well-defined knowledge base in retrieval-augmented generation \citep{gao2023retrieval}. Specifically, EXEMPLAR selects the historically best and worst heuristics, i.e., $x_{lb}$ and $x_{ub}$, respectively, as prediction boundaries (assuming the goal of the HG task is to derive the heuristic with the minimum fitness value), while prioritizing parent heuristics with better performance (i.e., lower fitness value). Parent heuristics with better performance are typically more complex and richer in semantic features than those with inferior performance, highly likely leading to higher prediction accuracy. In addition, any heuristic with the same fitness value as a previously selected example will not be chosen as an example. Because if LLMs encounter multiple examples sharing the same fitness value, their predictions may become biased towards this common fitness value, potentially overlooking semantic features. If each example has a distinct fitness value, LLMs can more effectively leverage semantic features to predict the performance of the new heuristics. The set of examples $\mathcal{P}_e$ is selected as follows:
\begin{equation}
\begin{aligned}
 \mathcal{P}_e & = \{ x_{lb}, x_{ub}  \mid x_{lb} = \arg\min_{x \in \mathcal{P}_h} f(x), x_{ub} = \arg\max_{x \in \mathcal{P}_h} f(x) \} \\
 & \cup  \{x \mid \mathop{\operatorname{arg\,top(N_e-2)}}\limits_{x \in \mathcal{P}_t} f(x) \}, \\
 & \mathcal{P}_t = \{ x \in \mathcal{P}_p \setminus \{x_{lb}, x_{ub}\} \mid f(x_i) \neq f(x_j), \forall i \neq j  \},
\end{aligned}
\label{EXEMPLAR}
\end{equation}
where $\mathcal{P}_h$ and $\mathcal{P}_p$ denote the set of all historical heuristics and the set of parent heuristics selected from the current iteration according to Eq. (\ref{rank}) to produce offspring, respectively, and $f(\cdot)$ denotes the fitness evaluation function, introduced in the following paragraph. EXEMPLAR selects the set $\mathcal{P}_e$ for each iteration. 



Nevertheless, LLMs cannot always accurately predict the performance of each heuristic. To mitigate the potential impact of incorrect predictions, we propose the Confidence Stratification (ConS) mechanism. Other than the LLM-predicted fitness value $\xi_i$, ConS prompts an LLM to provide a corresponding confidence level $\phi_i \in [0,1]$ based on the degree of semantic similarity between $x_i$ and the most similar examples in $\mathcal{P}_e$. Subsequently, based on $\phi_i$, ConS selectively accepts the predicted fitness values of certain heuristics, while others are reevaluated using COP instances. Intuitively, we implement the following design. For heuristic $x_i$, if $\phi_i$ is sufficiently high, ConS deems $\xi_i$ accurate. If $\phi_i$ is moderately high, only the top-ranked candidates in this category should be trusted to directly adopt $\xi_i$ without reevaluation, reflecting the degraded confidence level. For low $\phi_i$ values, they can only be directly adopted if $\xi_i$ is greater than a predetermined threshold. Because for these heuristics with an acceptable yet sub-par performance score and a not-too-low confidence level, it is intuitive to deem them having inferior performance, without the need for precise predictions \citep{xu2021renas}. Specifically, we heuristically define this threshold by gauging the known prediction boundaries, i.e., $lb_t$ and $ub_t$. When $\phi_i$ is extremely low, $\xi_i$ is deemed unreliable and the corresponding heuristic must be reevaluated. Such design is implemented as follows to define the fitness function $f(x_i)$:
\begin{equation}
f(x_i) = 
\begin{cases}
    \xi_i , & \phi_i \geq 1-\delta, \\
    \xi_i, & 1- 2\delta \leq \phi_i < 1-\delta \; \land \; x_i \in \mathop{\operatorname{arg\,top(m_t)}}\limits_{x \in \mathcal{P}_c} \phi(x), \\
    \xi_i, &  1-3\delta \leq \phi_i < 1- 2\delta \; \land \; \xi_i > lb_t + 3\delta(ub_t - lb_t), \\
    \mathcal{F}(x_i), & \text{otherwise},
\end{cases}
\label{ConS}
\end{equation}
where $\delta \in [0, 1/3]$ denotes a predefined interval to distinguish the performance range of the produced heuristics (a smaller $\delta$ value means ConS only accepts the predicted scores with the highest confidence), $\mathcal{P}_c$ denotes the set of heuristics whose $\phi_i$ values lie within the $[1 - 2\delta, 1 - \delta$) interval, and $\mathcal{F}(\cdot)$ denotes the conventional fitness evaluation function, which uses COP instances to evaluate heuristics. Furthermore, we gradually decrease the number of heuristics that do not require reevaluation in $\mathcal{P}_c$ after each iteration. Specifically, we set an acceptance threshold $m_t = \lfloor \alpha \cdot \beta^{t} \cdot N_o \rfloor$, where $\alpha$, $\beta \in (0,1)$, and $N_o$ denotes the number of the produced heuristics in the current iteration. 


The pseudocode of Hercules-P is presented in Algorithm~\ref{alg}, and its code is available online (https://github.com/wuuu110/Hercules). The details about the adopted crossover and elitist mutation operators, along with other EC definitions, are presented in Appendix~\ref{appendixcm}.

\begin{table*}[!t]
    \caption{Performance comparison of different GLS algorithms on TSP}
     \vspace{-0.4cm}
    \label{tabgls}
    \centering
    \resizebox{0.75\textwidth}{!}{
    \begin{tabular}{c|c|c|c|c}
    \toprule

    \multirow{2}{*}{Algorithm} & \multicolumn{2}{c|}{LLM: Llama3-70b } & \multicolumn{2}{c}{ LLM: GPT-4o-mini}  \\
      & Gain (\%)  ($n=100$) & Gain (\%) ($n=200$) & Gain (\%) ($n=100$) & Gain (\%) ($n=200$) \\
    \midrule
    KGLS-Random     &  -137.13   &  0.47      &  \underline{63.64}    & 3.44   \\
    KGLS-EoH (ICML'24)   & -369.10  & 5.82  & 25.53 & 5.62  \\
    KGLS-ReEvo  (NeurIPS'24)   &  -661.69   & 2.19    &  -280.79  & 2.45       \\
    KGLS-Hercules-P (ours)     &  -218.91    & 4.71   & \textbf{71.05}  & \underline{7.46}  \\
    KGLS-Hercules (ours)     &   -12.48    & 3.42   &  42.98   & \textbf{11.10}    \\
    \bottomrule
    \end{tabular}}
     \vspace{-0.3cm}
    \end{table*}  




 \vspace{-0.4cm}
 \section{Experimental Results}
\label{4}

\begin{algorithm}[!t]
\caption{Hercules-P for Deriving Heuristics}
\label{alg}
\begin{flushleft}
\textbf{Input}: Maximum iteration number $T$\\
\textbf{Output}: Best heuristic $x_{\textit{best}}$ 
\end{flushleft}
\begin{algorithmic}[1]

\STATE  Initialize and evaluate population $\mathcal{P}$  \\ \textcolor{gray}{\# Omitting Steps 4, 12, and 13 makes Hercules-P fall back to the original Hercules algorithm }

\FOR{iteration $t=0$ to $T$}
    
    \STATE  Select parents set $\mathcal{P}_p$ via Eq. (\ref{rank}) \quad \textcolor{gray}{\textit{//{Rank-based selection}}} 

    \STATE  Select examples set $\mathcal{P}_e$ for PPP via Eq. (\ref{EXEMPLAR}) \quad \textcolor{gray}{\textit{//{EXEMPLAR}}}

\IF{$t \leq \lambda \cdot T$}
    \STATE Provide search directions using core components of elite heuristics \quad \textcolor{gray}{\textit{//{CAP}}}

  \ELSE  
  \STATE Provide search directions using core components of parent heuristics
\ENDIF

    \STATE        Derive heuristics using crossover based on the produced search directions

    \STATE         Derive heuristics using elitist mutation based on the produced search directions

    \STATE         Predict fitness of the newly produced heuristics \quad \textcolor{gray}{\textit{//{PPP}}}

    \STATE         Determine fitness values $f(\cdot)$ via Eq. (\ref{ConS}) \quad \textcolor{gray}{\textit{//{ConS}}}

    \STATE         Update $\mathcal{P}$ and $x_{\textit{best}}$ with new heuristics

\ENDFOR
\end{algorithmic}
\end{algorithm}
This section presents extensive experimental results on various HG tasks, COPs, and LLMs to assess the performance of both Hercules and Hercules-P. Please refer to Appendices~\ref{appendixD} to \ref{appendixF} for the experimental setups with predefined hyperparameter values, prompts used in this paper, additional experimental results, comparative examples of search directions produced by RP and CAP, and the produced heuristics, respectively.
 \vspace{-0.2cm}
\subsection{Deriving GLS Heuristics to Solve TSP}
\label{4.1}

In this subsection, we exploit Hercules and Hercules-P to derive penalty heuristics for Guided Local Search (GLS) to solve the Travelling Salesman Problem (TSP). The seed function is human-designed heuristic KGLS \citep{arnold2019_KGLS_VRP}. We choose three LLM-based HG algorithms as benchmarking models, namely Random, EoH \citep{liu2024evolution}, and ReEvo \citep{ye2024reevo}. Random is a straightforward method that derives heuristics directly using LLMs without incorporating search directions and is commonly used as a baseline model in NAS studies \citep{li2020random}. In addition, unless specified otherwise, for the performance of LLM-based HG algorithms, namely Random, EoH, ReEvo, Hercules-P, and Hercules, we report the average performance of three independent runs, following the prior study \citep{ye2024reevo}. The average gains of the heuristics produced by these algorithms are presented in Table~\ref{tabgls}, where $n$ denotes the problem scale. The gain measure is calculated as 1-(the performance of the LLM-produced heuristics)/(the performance of the original KGLS).



As shown in Table~\ref{tabgls}, for the 200-node TSP, the heuristics produced by Hercules using GPT-4o-mini outperform those produced by the other HG algorithms, yielding the best performance gain of 11.1\%. In addition, when GPT-4o-mini is adopted, the average gain of Hercules-P drops by only 3.64\% comparing to Hercules, securing the second-best performance. EoH ranks at the third place in the gain metric. The experimental results shown in Table~\ref{tabgls} highlight that the choice of LLM significantly impacts the performance of the produced heuristics. Nevertheless, Hercules and \mbox{Hercules-P}  consistently outperform ReEvo across all node scales, regardless of the LLM in use. In addition, to better illustrate the improvement achieved by adopting Hercules, we present a real-wolrd case study on optimizing a travel route to visit all U.S. state capitals in Appendix~\ref{case}. The results of the showcased case study highlight the substantial practical values of Hercules, offering a travel route that reduces the total travel distance by nearly 200 miles compared to those provided by other LLM-derived algorithms (yielding an improvement of 1.69\% in the gain metric over the second-best method).

\begin{table}[!t]
    \footnotesize
    \centering
    \caption{Search cost comparison of different LLM-based HG algorithms on TSP}
    \vspace{-0.4cm}
\label{tabsearchreport}
    \resizebox{1.05\linewidth}{!}{
    \begin{tabular}{c|c|c|c|c c}
         \cmidrule[\heavyrulewidth]{1-5}
         Algorithm &  Gain (\%) & Time (m) & Context  &  Generation & 
         \\
        \multicolumn{1}{l|}{} &  &  & Token (k) & Token (k) & 
         \multirow{6}{*}{\rotatebox[origin=c]{270}{GPT-4o-mini}}\\
        \cmidrule{1-5}
        KGLS-Random & 3.44\tiny$\pm$1.20 & \underline{28.5}\tiny$\pm$2.2 & \textbf{0.2} & \textbf{19.4} &\\
        KGLS-EoH (ICML'24) & 5.62\tiny$\pm$1.83  & 37.2\tiny$\pm$7.2 & \underline{43.5} &\underline{26.2} &\\
        KGLS-ReEvo (NeurIPS'24) & 2.45\tiny$\pm$10.93 & 37.7\tiny$\pm$12.2 & 95.5 & 42.0 &\\
        KGLS-Hercules-P (ours) & \underline{7.46}\tiny$\pm$5.36 & \textbf{23.6}\tiny$\pm$3.0 & 143.4 & 31.2 &\\
        KGLS-Hercules (ours) & \textbf{11.10}\tiny$\pm$0.69 & 30.6\tiny$\pm$1.4 &95.8 & 33.3 & \\ \cmidrule[\heavyrulewidth]{1-5}
    \end{tabular}
    }
    \vspace{-0.4cm}
\end{table}
\begin{table}[!t]
    \centering  
    \footnotesize
    \caption{Performance comparison of different constructive heuristic algorithms on TSPLIB} \label{tabselect}
    \vspace{-0.4cm}
    \resizebox{0.5\textwidth}{!}{
    \begin{tabular}{l|ccccc c} 
    \cmidrule[\heavyrulewidth]{1-6} 
        \multicolumn{1}{l|}{instances} & Random & EoH & \multicolumn{1}{c}{ReEvo} & Hercules-P & Hercules & \\ 
       \multicolumn{1}{l|}{(total number)}   &  &(ICML'24) & (NeurIPS'24) & (ours) &(ours) & \multirow{4}{*}{\rotatebox[origin=c]{270}{GPT-3.5-turbo}} \\ 
    \cmidrule{1-6} 
    $n<101$ (4) &    -3.92    &\textbf{16.68} & 1.18  & \underline{14.16} & 10.52 & \\
    $101 \leq n \leq  500$ (9)    &-3.80   &-0.60 & -1.17  & \underline{0.71} & \textbf{2.25} & \\
    $n>500$ (5)  &     -5.73   &\textbf{5.32} & 0.46  & 0.95 & \underline{5.18} & \\ \cmidrule{1-6}
    Avg. Gain (\%)   (18) &   -4.49   &\underline{4.80} & -0.16   & 3.42 & \textbf{4.87} & \\ 
    \cmidrule[\heavyrulewidth]{1-6} 
    \end{tabular}
    }
     \vspace{-0.6cm}
\end{table}
\begin{table*}[!t]
    \footnotesize
    \centering
    \caption{Performance comparison of different ACO algorithms on BPP and MKP}
    \label{tabaco}
    \vspace{-0.4cm}
    \resizebox{0.85\textwidth}{!}{
    \begin{tabular}{c|c|ccc|ccc}
    \toprule
     \multirow{2}{*}{Algorithm} & \multirow{2}{*}{Type} & \multicolumn{3}{c|}{BPP (Gain (\%)), LLM: Llama3.1-405b} & \multicolumn{3}{c}{MKP (Gain (\%)), LLM: Gemma2-27b} \\
     & & \(n=120\)  & \(n=500\) & \(n=1,000\) & \(n=120\)  & \(n=500\) & \(n=1,000\)  \\
     \midrule
     ACO+Random & ACO+LLM & 0.00 \tiny$\pm$0.00& -0.09$\pm$0.04 & 0.00\tiny$\pm$0.04 & 1.24\tiny$\pm$0.03 & 3.21\tiny$\pm$1.17 & 4.01\tiny$\pm$1.59 \\ 
     ACO+EoH (ICML'24) & ACO+LLM & 0.14\tiny$\pm$0.12  & 0.16\tiny$\pm$0.35 & 0.38\tiny$\pm$0.53  & \underline{1.61}\tiny$\pm$0.48 & 4.42\tiny$\pm$1.10 & 5.81\tiny$\pm$1.40 \\ 
     ACO+ReEvo (NeurIPS'24) & ACO+LLM & \underline{0.66}\tiny$\pm$0.50 & \underline{1.49}\tiny$\pm$0.25 & 2.01\tiny$\pm$0.34 & 1.59\tiny$\pm$0.72 & 4.67\tiny$\pm$0.95 & \underline{6.31}\tiny$\pm$0.38 \\ 
     ACO+Hercules-P (ours)& ACO+LLM  & 0.08\tiny$\pm$0.08 & 1.47\tiny$\pm$0.16 & \underline{2.04}\tiny$\pm$0.16 &1.44\tiny$\pm$0.38 & \underline{4.73}\tiny$\pm$0.90 & 6.14\tiny$\pm$1.21 \\ 
     ACO+Hercules (ours)& ACO+LLM & \textbf{0.84}\tiny$\pm$0.14 & \textbf{1.64}\tiny$\pm$0.17 & \textbf{2.19}\tiny$\pm$0.20 & \textbf{1.99}\tiny$\pm$0.50 & \textbf{6.40}\tiny$\pm$0.97 & \textbf{8.22}\tiny$\pm$1.17 \\     
    \bottomrule
    \end{tabular}}
    \vspace{-0.4cm}
\end{table*}  
\begin{table*}[!t]
    \footnotesize
    \centering
    \caption{Performance comparison of different NCO solvers on TSP and CVRP}
    \vspace{-0.4cm}
    \label{tabNCO}
    \resizebox{0.9\textwidth}{!}{
    \begin{tabular}{c|c|ccc|ccc}
    \toprule
     \multirow{2}{*}{Algorithm} & \multirow{2}{*}{Type} & \multicolumn{3}{c|}{TSP (Gain (\%))} & \multicolumn{3}{c}{CVRP (Gain (\%))}  \\
     & & \(n=200\)  & \(n=500\)  & \(n=1,000\) & \(n=200\) & \(n=500\) & \(n=1,000\)  \\
    \midrule 
     POMO+Random  &NCO+GPT-4o-mini  & \textbf{3.05} & -18.90 & -35.10& \textbf{3.07}& \underline{1.14} & \textbf{2.86} \\ 
     POMO+EoH (ICML'24) &NCO+GPT-4o-mini  & 2.19 & \underline{1.42} & \underline{1.47} & 0.48& -1.83 & 0.27 \\ 
     POMO+ReEvo (NeurIPS'24) &NCO+GPT-4o-mini  & 2.38 & -5.24 & -2.78 & 0.34 & -14.20 & -3.01 \\ 
     POMO+Hercules-P (ours) &NCO+GPT-4o-mini  &-0.10 & -4.81 & -3.58 & -0.57 & -3.29& -0.57 \\ 
     POMO+Hercules (ours) &NCO+GPT-4o-mini  & \underline{2.49} & \textbf{6.62} & \textbf{16.43} & \underline{1.53} & \textbf{1.22} & \underline{1.59}  \\ 
     \midrule 
     LEHD+Random  &NCO+GPT-4o-mini  & \underline{9.93} & \textbf{8.83} & 5.44 & 1.72& 2.33 & \underline{1.68} \\ 
     LEHD+EoH (ICML'24) &NCO+GPT-4o-mini  & \textbf{10.67} & \underline{7.73} & \underline{6.09} & 6.62 & 3.57& 0.47\\
     LEHD+ReEvo (NeurIPS'24) &NCO+GPT-4o-mini  & 6.94 & -1.78 & 1.56 &  \underline{10.19} & \underline{4.97} & 0.70 \\
     LEHD+Hercules-P (ours) &NCO+GPT-4o-mini &9.55 & 7.53 & \textbf{6.89} &4.44  &2.45 & 0.75 \\
     LEHD+Hercules (ours) &NCO+GPT-4o-mini  &7.46 & 6.64 & 5.14 & \textbf{14.37} &\textbf{7.90} & \textbf{2.33} \\
    \bottomrule
    \end{tabular}}
     \vspace{-0.4cm}
\end{table*}
Table~\ref{tabsearchreport} presents the search cost comparison of LLM-based HG algorithms across four metrics, namely gain (identical to the bottom-right cell of Table~\ref{tabgls}), search time, context token, and generation token. The results show that Hercules yields better gains without substantially increasing the costs of context and generation tokens, compared to ReEvo. Moreover, ReEvo and EoH spend longer search time when compared to the others, likely due to their ineffective search directions, which cause the LLM to derive complex but suboptimal heuristics. The std value of 10.93 for ReEvo further underscores this issue. On the other hand, \mbox{Hercules-P} reduces the overall search time to 77\% (23.6/30.6) of that required by Hercules. Although \mbox{Hercules-P} uses approximately 1.5 times more context tokens than Hercules and ReEvo, it does not significantly increase the cost of generation tokens, which are typically more expensive \citep{OpenAI}. This makes \mbox{Hercules-P} ideal for environments with limited computing resources. Notably, Random utilizes only 0.2k context tokens, because of its simple prompts used for heuristic generation. However, this simplicity limits its ability to derive well-performing heuristics.



 \vspace{-0.3cm}
\subsection{Constructive Heuristics to Solve TSP}
\label{4.2}

To assess the generalization capabilities of Hercules and Hercules-P across different HG tasks, we employ them in this subsection to derive constructive heuristics, which sequentially select unvisited nodes for solving real-world TSPLIB benchmarks \citep{reinelt1991tsplib}. The seed function is genetic programming hyper-heuristic \citep{duflo2019gp_hh_tsp}. As shown in Table~\ref{tabselect}, Hercules achieves the highest average gain of 4.87\% across eighteen TSPLIB instances, followed by EoH with the average gain of 4.8\%. In contrast, both Random and ReEvo perform poorly, yielding negative gains on average, i.e., failing to improve the performance of the seed function.


 \vspace{-0.4cm}
\subsection{Deriving Heuristic Measures for ACO to Solve BPP, MKP, OP, and TSP}
\label{4.3}

In this subsection, we exploit Hercules and Hercules-P to derive heuristic measures for Ant Colony Optimization (ACO) applied to the Bin Packing Problem (BPP) and Multiple Knapsack Problem (MKP). The seed function is a conventional ACO algorithm \citep{4129846}. We adopt Llama3.1-405b to solve BPP while adopting Gemma2-27b to solve MKP. This is because Llama3.1-405b fails to improve the seed function of MKP regardless of which LLM-based HG algorithm is executed. As shown in Table~\ref{tabaco}, Hercules outperforms the other algorithms across all COPs and LLMs, with particularly strong performance observed when solving the 1,000-scale MKP, achieving an 8.22\% gain. In addition, when using Llama3.1-405b, Random fails to derive superior heuristics compared to the original ACO, while EoH achieves only a modest improvement, falling short when compared to the substantial gains obtained by Hercules-P and Hercules. 
\begin{figure}[!t]
    \centering
    \subfloat[OP]{%
        \includegraphics[width=0.5\linewidth]{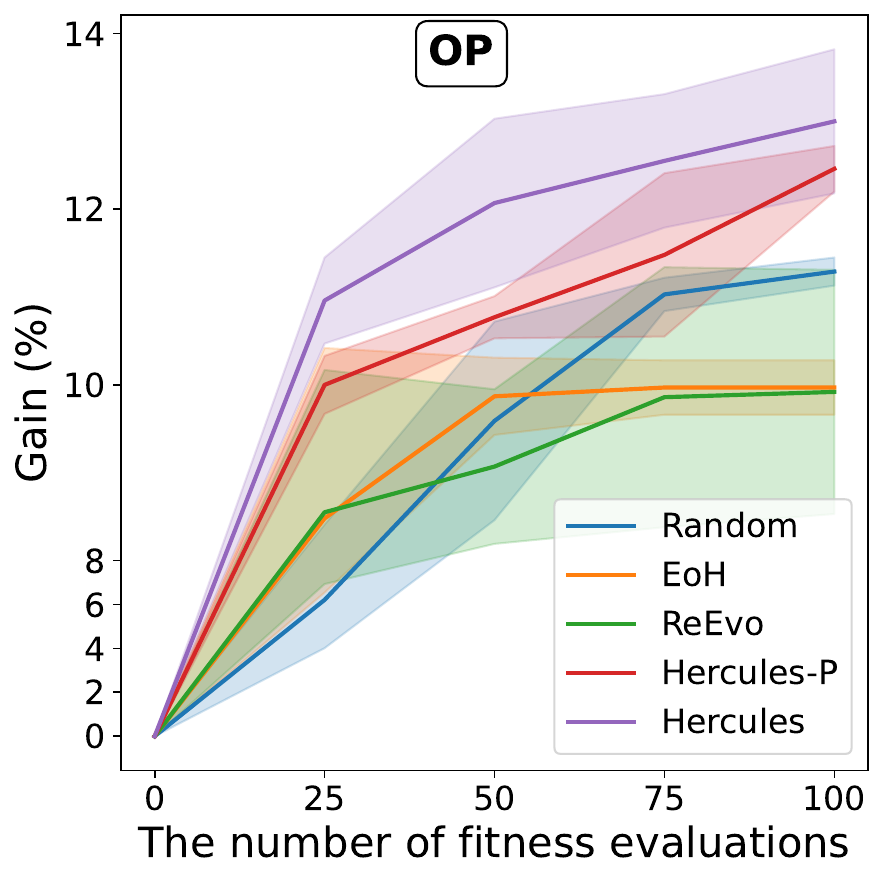}%
    }%
    \subfloat[TSP]{%
        \includegraphics[width=0.5\linewidth]{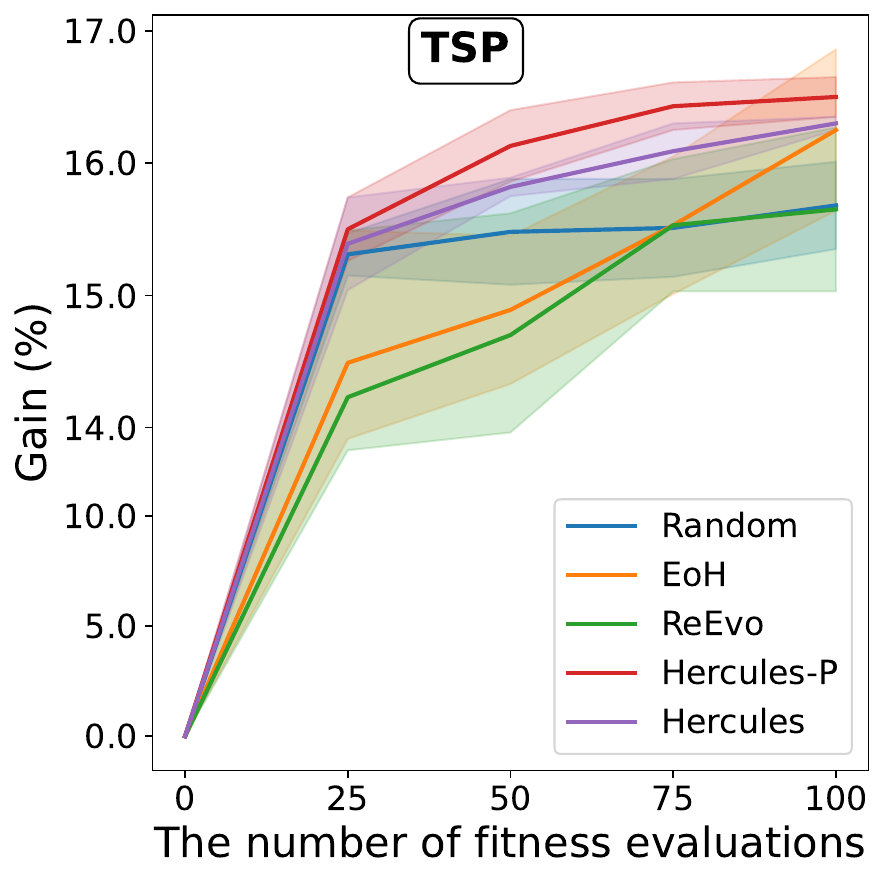}%
    }%
\caption{Convergence curves of different HG algorithms.}
    \label{figop}
    \vspace{-0.6cm}
\end{figure}
In addition, Figure~\ref{figop} presents the convergence curves of various LLM-based HG algorithms during their search processes for deriving ACO heuristic measures to solve the Orienteering Problem (OP) and TSP. The adopted LLM is GLM-4-Plus. As shown in Figure~\ref{figop}, Hercules and Hercules-P consistently derive better heuristics more efficiently than other LLM-based HG algorithms, during the entire search process. This is primarily attributed to the proposed CAP.

 \vspace{-0.2cm}
\subsection{Reshaping Attention Scores for NCO to Solve TSP and CVRP}
\label{4.4}
\begin{table*}[!t]
    \centering  
    \footnotesize
    \caption{Search time comparison of different LLM-based HG algorithms on diverse HG tasks} \label{tabtime}
    \vspace{-0.4cm}
\resizebox{0.75\textwidth}{!}{
    \begin{tabular}{c|l|ccccc} 
    \toprule
     \multirow{8}{*}{Time (m)}  & \multicolumn{1}{r|}{Algorithm } &  Random & EoH & \multicolumn{1}{c}{ReEvo} & Hercules-P & Hercules  \\ 
       & \multicolumn{1}{l|}{Task} &   &(ICML'24) & (NeurIPS'24) & (ours) &(ours)  \\ 
\cmidrule{2-7}
    &TSP-POMO    &   \underline{15.95}   & 18.17 & 17.89   & \textbf{11.50} & 22.12  \\ 
    &CVRP-POMO    &   \underline{16.86}   & 30.54 & 29.57   & \textbf{9.51} & 10.28  \\ 
    &TSP-LEHD    &   \underline{30.58}   & 39.55 & 37.25   & \textbf{28.72} & 41.43   \\ 
    &CVRP-LEHD ($n=200$) & \underline{45.73} & 67.27 & 61.58  & \textbf{31.20} & 42.80   \\ 
   & CVRP-LEHD ($n=500$) &\underline{149.31} & 224.01 & 215.61 & \textbf{110.28} & 178.01 \\ 
   & CVRP-LEHD ($n=1,000$) & \underline{639.83}& 854.25 & 854.71  &\textbf{310.98}& 757.67 \\ 
    \bottomrule
    \end{tabular}
    }
     \vspace{-0.4cm}
\end{table*}
\begin{table*}[!t]
    \footnotesize
    \centering
    \caption{Ablation study results on different design choices}
    \label{ablation}
    \vspace{-0.4cm}\resizebox{1\linewidth}{!}{
    \begin{tabular}{l|c||l|c||l|c||l|c c}
        \cmidrule[\heavyrulewidth]{1-8}
        Algorithm & Gain (\%) & Algorithm & Gain (\%) & Algorithm & Gain (\%) & Algorithm & Gain (\%) & \multirow{5}{*}{\rotatebox[origin=c]{270}{GPT-4o-mini}}\\ \cmidrule{1-8}
        w/o CAP &  3.12  &Hercules ($\lambda=0.5$) & 5.96 & w/o ConS & -4.06 & Hercules-P ($\delta=0.2$) & \underline{7.01} &  \\
        w/o rank-based selection & \underline{8.49} & Hercules ($\lambda=0.9$) & \underline{8.90} & w/o EXEMPLAR & \underline{-0.30} & Hercules-P ($\delta=0.3$)&6.21 &\\ 
         &  & Hercules ($\lambda=1$) & 5.60 & & & & &\\ \cmidrule{1-8}
        Hercules (w/o PPP) & \textbf{11.10} & Hercules ($\lambda=0.7$) & \textbf{11.10} & Hercules-P & \textbf{7.46} &Hercules-P ($\delta=0.1$) & \textbf{7.46} & \\ \cmidrule[\heavyrulewidth]{1-8}
    \end{tabular}
    }
    \vspace{-0.4cm}
\end{table*} 
Recently, \citet{wang2024distance} demonstrated that reshaping attention scores can enhance the generalization performance of NCO solvers trained on small-scale COPs for solving large-scale COPs. To assess the effectiveness of Hercules and Hercules-P on NCO solvers, following \citep{ye2024reevo}, we select DAR \citep{wang2024distance} as the seed function for TSP and the vanilla POMO \citep{kwon_pomo_2020} and LEHD \citep{luo_neural_2023} as seed functions for Capacitated Vehicle Routing Problem (CVRP). As shown in Table~\ref{tabNCO}, Random outperforms the other four LLM-based HG algorithms on certain tasks. A plausible reason for this is that the LLM corpora may lack sufficient knowledge of emerging NCO domains, thus limiting the performance of the other four LLM-based HG algorithms. Nevertheless, the heuristics derived by Hercules outperform the corresponding seed functions across a wider range of tasks compared to Random. For example, Hercules performs better than Random on the 500- and 1,000-node scales for the TSP-POMO task. In addition, in Table~\ref{tabtime}, we present the search time of different LLM-based HG algorithms across diverse NCO tasks. As shown in Table~\ref{tabtime}, Hercules-P outperforms the other LLM-based HG algorithms in terms of search time, while Random ranks at the second place. On these NCO tasks, Hercules-P reduce the search time by 48\%, 7\%, 31\%, 27\%, 38\%, and 59\%, respectively, when compared to Hercules. This reduction in search time is especially significant for large-scale COPs, where search can extend to several hours. In such cases, incorporating PPP demonstrates high efficiency in reducing resource expenditure.

 \vspace{-0.4cm}
\subsection{Ablation Studies}
\label{4.5}


In this subsection, we conduct ablation studies to investigate the effectiveness of the design choices of Hercules and Hercules-P, and present the results in Table~\ref{ablation}. The adopted HG task is deriving penalty heuristics for GLS to solve TSPs (see Section~\ref{4.1}). Specifically, w/o CAP refers to the setting using RP to provide search directions, w/o rank-based selection refers to the setting that randomly selects parent heuristics, w/o ConS refers to the setting that PPP assumes all predictions are accurate, and w/o EXEMPLAR refers to the setting that heuristic examples are randomly selected from the current population. For all the other experiments presented in this paper, $\lambda = 0.7$ is applied for Hercules, and $\delta = 0.1$ is applied for Hercules-P. As shown in Table~\ref{ablation}, when CAP is omitted, the gain decreases by 7.98\%, further demonstrating that CAP produces more specific search directions. In addition, the proposed rank-based selection mechanism significantly contributes to the superior performance of Hercules. For Hercules-P, ConS effectively determines unreliable predictions, preventing them from negatively affecting the derivation of high-performance heuristics. Finally, when EXEMPLAR is omitted, the gain decreases by 7.76\%, mainly due to the associated degradation in predictive accuracy (elaborated in the following paragraph). 


\begin{wrapfigure}[15]{r}{0.25\textwidth}
  \centering
  \includegraphics[width=0.24\textwidth]{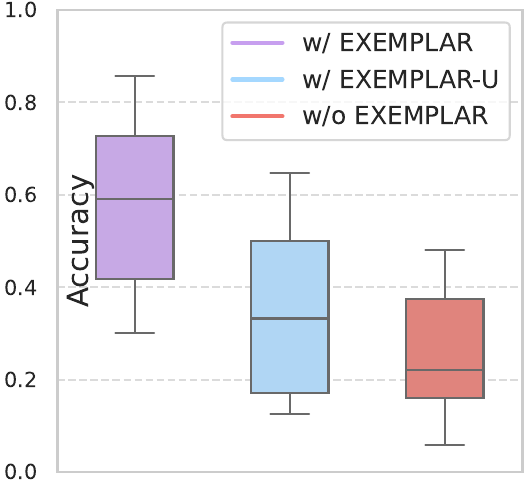}
  \caption{Ablation study on different EXEMPLAR variants.}

  \label{accuracy}
\end{wrapfigure}
We further present the predictive accuracy of PPP with and without EXEMPLAR, both of which are executed ten times, aiming to perform meaningful statistical tests. In addition, we include w/~EXEMPLAR-U as an additional setting, where w/ EXEMPLAR-U is able to select heuristics with identical fitness values. To assess whether different versions of EXEMPLAR can accurately predict the fitness values of the produced heuristics, we need to set a quantifying measure. Specifically, we intuitively deem a prediction accurate if the absolute error between the predicted fitness value and the true fitness value is less than $\delta\cdot(ub_t-lb_t)$. As shown in Figure~\ref{accuracy}, the inclusion of EXEMPLAR improves the median of predictive accuracy by 26\% and 37\% (both significantly different: $p= $0.048 and 0.004) when compared to w/~EXEMPLAR-U and w/o~EXEMPLAR, respectively. In addition, the Pearson correlation coefficient analysis reveals a correlation coefficient of 0.39, indicating a moderate linear relationship between the predicted and true values. The one-way ANOVA test results yield a $p$-value of 0.6, suggesting that the mean difference between the predicted and true values is not statistically significant. It is imperative to clarify that although the proposed PPP may seem less accurate in predicting heuristic performance, the values shown in Figure~\ref {accuracy} are determined by a strict measure of fitness values as afore-defined and they do not exhibit a strong correlation with the overall performance of Hercules-P, because many produced heuristics are reevaluated (see ConS in Section~\ref{3.2}). As discussed in Sections~\ref{4.1} and \ref{4.4}, Hercules-P reduces search time by 7\%$\sim$59\% when compared to Hercules, while achieving on-par gain. We strongly believe that PPP is highly beneficial for HG tasks that require rapid solutions, e.g., deriving heuristics for the dynamic, near-real-time allocation of resources in 5G mobile edge cloud networks \citep{9915432}. We plan to extend PPP by integrating it with other methods, such as beam search, to further enhance its predictive accuracy. 


\vspace{-0.2cm}
\section{Conclusion}
\vspace{-0.1cm}
To derive well-performing heuristics, we propose Hercules, which exploits our proprietary CAP to abstract the core components from elite heuristics, to produce more specific search directions. In addition, we introduce Hercules-P, a resource-efficient variant that integrates CAP with our novel PPP. PPP exploits previously evaluated heuristics to predict the performance of newly produced ones, thereby reducing the required computing resources for heuristic evaluations. The experimental results demonstrate the effectiveness of Hercules, Hercules-P, and all our designed mechanisms.
\vspace{-0.2cm}
\section*{Acknowledgments}
\vspace{-0.1cm}
This work was supported in part by the Jilin Provincial Department of Science and Technology Project under Grant 20230201083GX. We also thank the Computing Center of Jilin Province for providing technical support.
\bibliography{llm}
\bibliographystyle{ACM-Reference-Format}

\appendix

\section{Derivation of Information Gain Formula}
\label{appendixA}

\begin{proposition}
The information gain from abstracting core components is equal to:
\begin{align}
    IG(\Omega_t) = -\sum\nolimits_{j=0}^{k} p_j \log p_j \in (0, \log (k+1)].
\end{align}
\end{proposition}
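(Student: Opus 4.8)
The plan is to recognize Eq.~(\ref{eq}) as an instance of the chain rule for Shannon entropy and then bound the resulting partition entropy. First I would establish the key factorization: because the subsets $\{\Omega_j\}_{j=0}^{k}$ are mutually exclusive with union $\Omega_t$, every direction $\omega_i \in \Omega_t$ lies in exactly one $\Omega_j$, and the definitions of conditional probability give $p(\omega_i \mid \Omega_t) = p_j\, p(\omega_i \mid \Omega_j)$ for $\omega_i \in \Omega_j$, where $p_j = p(\Omega_j)/p(\Omega_t)$. This single identity drives the entire derivation.

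Next I would substitute this factorization into $H(\Omega_t)$ and split the sum over $\Omega_t$ into a double sum over the partition:
\begin{equation}
H(\Omega_t) = -\sum_{j=0}^{k} \sum_{i:\,\omega_i \in \Omega_j} p_j\, p(\omega_i \mid \Omega_j)\bigl[\log p_j + \log p(\omega_i \mid \Omega_j)\bigr].
\end{equation}
Separating the logarithm into two terms, I would use $\sum_{i:\,\omega_i \in \Omega_j} p(\omega_i \mid \Omega_j) = 1$ on the first term and recognize the second term as $p_j H(\Omega_j)$, yielding the chain-rule identity $H(\Omega_t) = -\sum_{j=0}^{k} p_j \log p_j + \sum_{j=0}^{k} p_j H(\Omega_j)$. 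Rearranging and comparing with Eq.~(\ref{eq}) immediately collapses $IG(\Omega_t)$ to $-\sum_{j=0}^{k} p_j \log p_j$, the Shannon entropy of the partition distribution $\{p_j\}$.

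For the bounds, I would first note that $\sum_{j=0}^{k} p_j = 1$, so $\{p_j\}_{j=0}^{k}$ is a genuine probability distribution on $k+1$ outcomes. The upper bound follows from the maximum-entropy property: applying Jensen's inequality to the concave $\log$ gives $-\sum_j p_j \log p_j = \sum_j p_j \log(1/p_j) \le \log\!\bigl(\sum_j p_j \cdot (1/p_j)\bigr) = \log(k+1)$, with equality exactly at the uniform distribution $p_j = 1/(k+1)$, so the endpoint $\log(k+1)$ is attained and the interval is closed there.

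The subtler endpoint is the strict lower bound $IG(\Omega_t) > 0$. Since entropy is nonnegative and vanishes only for a degenerate distribution (all mass on a single $\Omega_j$), strict positivity requires the assumption that abstraction genuinely distributes directions across at least two subsets, i.e.\ at least two $p_j$ are positive. I expect this to be the main point needing care: it is not a purely formal consequence of the algebra but reflects the modeling premise that the $k$ core components together with the independent subset induce a non-trivial partition, so that the interval is open at $0$. Under this premise the claimed range $(0, \log(k+1)]$ follows.
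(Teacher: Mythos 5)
Your proposal is correct and follows essentially the same route as the paper's proof: both hinge on the factorization $p(\omega_i \mid \Omega_t) = p_j\, p(\omega_i \mid \Omega_j)$ over the mutually exclusive subsets, reduce $IG(\Omega_t)$ to the partition entropy $-\sum_{j=0}^{k} p_j \log p_j$ (your chain-rule framing is just a cleaner packaging of the paper's term-by-term grouping), and identify the uniform distribution as attaining the maximum $\log(k+1)$. Your handling of the lower endpoint also matches the paper, which likewise treats strict positivity not as an algebraic consequence but as a modeling premise (the paper attributes it to the diversity of LLM training corpora ruling out a degenerate $p_j = 1$); your Jensen's-inequality justification of the upper bound is in fact slightly more rigorous than the paper's bare assertion.
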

\begin{proof}
\begin{align*}
    IG(\Omega_t) &= H(\Omega_t) - p_0 H(\Omega_0) - \dots - p_{k} H(\Omega_k) \\
    &= - \sum\nolimits_\_{i: \omega_i \in \Omega_t} p(\omega_i | \Omega_t) \log p(\omega_i | \Omega_t) \\
    &\quad + p_0 \sum\nolimits_{i: \omega_i \in \Omega_0} p(\omega_i | \Omega_0) \log p(\omega_i | \Omega_0) + \dots \\
    &\quad + p_k \sum\nolimits_{i: \omega_i \in \Omega_k} p(\omega_i | \Omega_k) \log p(\omega_i | \Omega_k) \\
   &= \sum\nolimits_{i: \omega_i \in \Omega_0} p(\omega_i | \Omega_0) \left[\log p(\omega_i | \Omega_0) - \log p(\omega_i | \Omega_t)\right] + \dots \\
   &\quad + \sum\nolimits_{i: \omega_i \in \Omega_k} p(\omega_i | \Omega_k) \left[\log p(\omega_i | \Omega_k) - \log p(\omega_i | \Omega_t)\right].
\end{align*}
According to the conditional probability, $p_j \cdot p(\omega_i | \Omega_j) = p(\omega_i | \Omega_t)$, $\forall j \in \{0, 1,\cdots, k\}$. Thus, the \( j \)th term simplifies to the following expression:
\begin{align*}
& \sum\nolimits_{i: \omega_i \in \Omega_j} p(\omega_i | \Omega_j) \left[\log p(\omega_i | \Omega_j) - \log p(\omega_i | \Omega_t)\right] \\
&= \sum\nolimits_{i: \omega_i \in \Omega_j} p(\omega_i | \Omega_j) \log \frac{p(\omega_i | \Omega_j)}{p(\omega_i | \Omega_t)} \\
&= -\sum\nolimits_{i: \omega_i \in \Omega_j} p(\omega_i | \Omega_j) \log p_j \\
&= -p_j \log p_j.
\end{align*}
Therefore, we conclude that
\begin{align}
    IG(\Omega_t) = -\sum\nolimits_{j=0}^{k} p_j \log p_j.
\end{align}
When \(\forall j \in \{0, 1,\cdots, k\}, p_j = \frac{1}{k+1} \), \( IG(\Omega_t) \) reaches its maximum value of \( \log(k+1) \). When \( \exists j \in \{0, 1,\cdots, k\} \) s.t. \( p_j = 1 \), \( IG(\Omega_t) \) reaches its minimum value of 0. However, due to the diverse nature of LLM training corpora, the LLM will not consistently provide the same direction. Therefore, by abstracting core components, the unspecificity (entropy) can decrease within the \( (0, \log(k+1)] \) interval.
\end{proof}
\vspace{-0.3cm} 
It is important to note that each heuristic may consist of multiple core components, or the same core component may be used across multiple heuristics. However, to simplify the deduction procedures, we assume a one-to-one correspondence between core components and heuristics in our analysis. This assumption does not affect the validity of our conclusion that $IG(\Omega_t) = -\sum\nolimits_{j=0}^{k} p_j \log p_j$, whose value ranges within the $ (0, \log (k+1)]$ interval, because the made assumption primarily serves to streamline the proof process. Specifically, if the number of core components $k_c$ differs from the number of heuristics $k$, $IG(\Omega_t)$ would be partitioned into $k_c + 1$ subsets instead of $k+1$ (the assumed case of a one-to-one correspondence between core components and heuristics). Despite this difference, the mathematical framework for entropy and information gain remains unchanged except the said replacement of $k$ with $k_c$. In addition, our framework is designed in a way that a single core component can map to multiple search directions, rather than having a single search direction corresponding to multiple core components. For example, the core component “Normalizing values” may correspond to directions such as “Normalize penalties relative to overall distance” or “Normalize distances effectively to balance contributions”. This assumption ensures flexibility in mapping core components to directions. Therefore, we deem our assumption of partitioning all possible directions into mutually exclusive subsets associated with core components is reasonable.
\section{Adopted Crossover, Elitist Mutation Operators, and Other EC Definitions}
\label{appendixcm}
For Hercules and Hercules-P, each heuristic code snippet denotes an individual within the population. Parent heuristics refer to the heuristics selected according to Eq. (\ref{rank}), which are utilized during the crossover and mutation processes to derive offspring heuristics. Elite heuristics denote the top-$k$ heuristics selected based on corresponding fitness values within the current population. During population initialization, we employ a simple prompt proposed by \citep{ye2024reevo} to guide the LLM in randomly deriving the initial population. For consistency, we adopt the crossover and mutation operators proposed by \citep{ye2024reevo} in all the experiments presented in this paper. Specifically, for the adopted crossover operator, two distinct parent heuristics are selected according to Eq. (\ref{rank}). Subsequently, the relative fitness values of these two heuristics determine which one serves as the primary learning exemplar for deriving an offspring heuristic. The employed mutation operator is elitist mutation, which derives multiple heuristics based on the historically best heuristic, aiming to produce high-performance ones. 


\vspace{-0.2cm}
\section{Detailed Hyper-parameters and Experimental Setups}
\label{appendixD}
\noindent \textit{\textbf{Hardware}} We comprehensively evaluate the performance of all algorithms, using a computer equipped with an Intel(R) Xeon(R) W-2235 CPU.
\begin{table}[!t]
    \centering
    \small
    \caption{Parameters of Hercules and Hercules-P}
    \vspace{-0.4cm}
    \label{tabparams}
    \begin{tabular}{l|l}
    \toprule
    Parameter & Value \\ \midrule
    LLM temperature & 1 \\
    CAP coefficients $k, \lambda$& 5, 0.7\\
    Maximum number of evaluations & 100 \\
    Population size $N$, Crossover rate, Mutation rate & 15, 1, 0.5 \\
    ConS coefficients $\delta, \alpha, \beta$& 0.1, 0.5, 0.8 \\
    \bottomrule
    \end{tabular}
     \vspace{-0.6cm}
\end{table}

\noindent \textit{\textbf{Hyper-parameters.}} In Table~\ref{tabparams}, we present the hyper-parameters of Hercules and Hercules-P. In addition, following the prior study \citep{ye2024reevo}, the temperature of the LLM is increased by 0.3 during the initial phase to enhance the diversity of the initial population. For the parameters of seed functions (e.g., KGLS parameters), we adopt the configurations specified in the prior study \citep{ye2024reevo} to ensure a fair comparison. This study also documents the definitions of all HG tasks used in this paper. In addition, following the prior study \citep{ye2024reevo}, the performance metric for TSP and CVRP is the gap, which is defined as the relative difference in the “average length” between corresponding heuristics and LKH3 \citep{lkh3}. For BPP and MKP, the performance metrics are the number of bins used and the total profit, respectively. For OP, the performance metric is the total prize collected by visiting nodes.

Finally, for all experiments in this paper, we exploit the training and test datasets to derive heuristics and assess the final derived heuristics, respectively. Specifically, during the search process, the performance of heuristics on the training datasets determines their fitness values. The heuristic with the best performance on the training dataset is selected as the final derived heuristic. We then further assess the performance of all final derived heuristics on test datasets and report the average experimental results in Section~\ref{4}. The details of training datasets and test datasets of all HG tasks can be found in the prior study \citep{ye2024reevo}.

\vspace{-0.2cm}
\section{Prompts Used in Hercules and Hercules-P}
\label{appendixE}
Prompts used for Hercules or Hercules-P can be categorized as problem-specific prompts (e.g., the heuristic description, COP description, seed function, and function signature) and general prompts (e.g., prompts for CAP and PPP). All these prompts are available in the provided source code link. 

\vspace{-0.2cm}
\section{Additional Experimental Results}
\subsection{Real-world Case Study}
\label{case}
\begin{table}[!t]
    \centering
    \small    \caption{Performance comparison of different GLS algorithms on a real-world TSP instance}
    \vspace{-0.4cm}
    \label{tabcase}
    \begin{tabular}{c|c|c}
    \toprule
    Algorithm &  Gain (\%) &  Length (mile) \\ \midrule
    KGLS-Random & 0.24 & 10836.79 \\
    KGLS-EoH (ICML'24) & \underline{0.38} & \underline{10821.49} \\
     KGLS-ReEvo  (NeurIPS'24)& 0.29  & 10831.75\\
     KGLS-Hercules-P (ours)&   0.17 & 10842.03 \\
     KGLS-Hercules (ours)& \textbf{2.07}  &\textbf{10637.36}
     \\
    \bottomrule
    \end{tabular}
     \vspace{-0.4cm}
\end{table}
To better illustrate the improved efficiency by adopting Hercules to derive heuristics in real-world scenarios, we examine a widely used real-world demonstration task of visiting all state capital cities in the United State (excluding Alaska and Hawaii) as a case study \citep{PADBERG19871}.  In this task, each capital city is treated as a node, framing the problem as a 49-node TSP instance.  In addition, in this section, we directly compare the KGLS algorithms derived from different LLM-based HG methods in Section~\ref{4.1}. As shown in Table~\ref{tabcase}, the Hercules-derived algorithms reduce the total mileage by nearly 200 miles compared to other algorithms, demonstrating the significant practical advantages of Hercules. 

\vspace{-0.4cm}
\subsection{Additional Experiments of Reshaping Attention Scores for NCO}
\label{appendixC3}

In this subsection, we adopt GLM-4-0520 as the LLM to further assess the performance of Hercules in reshaping attention scores for LEHD to solve large-scale TSP instances. In the experiments conducted in this subsection, we derive heuristics using training sets of problem sizes 200, 500, and 1,000 and evaluate their performance on test sets of the corresponding sizes. As shown in Table~\ref{tabNCOGLM}, Hercules achieves the best performance on test datasets with 200 and 500 nodes, while Hercules-P outperforms on the 1,000-node scale, achieving a gain of 11.72\% over the seed function. 

\begin{table}[!t]
    \footnotesize
    \centering
    \caption{Performance comparison of different LLM-based HG algorithms on TSP\_LEHD task}
     \vspace{-0.4cm}
    \label{tabNCOGLM}
    \resizebox{0.45\textwidth}{!}{
    \begin{tabular}{c|ccc c}
    \cmidrule[\heavyrulewidth]{1-4}
     \multirow{2}{*}{Algorithm} &  \multicolumn{3}{c}{TSP (Gain (\%))} & 
         \multirow{7}{*}{\rotatebox[origin=c]{270}{GLM-4-0520}}  \\
     &  \(n=200\)  & \(n=500\)  & \(n=1,000\) &  \\
     \cmidrule{1-4}
     LEHD+Random   & 8.48 & 8.36 & 7.70 & \\ 
     LEHD+EoH (ICML'24)  & \underline{10.84} & \underline{9.47} & 8.06& \\
     LEHD+ReEvo (NeurIPS'24)  & 10.13 & 8.70 & 6.97 & \\
     LEHD+Hercules-p (ours)  &9.98 & 8.80 & \textbf{11.72}  &\\
     LEHD+Hercules (ours)  &\textbf{11.06} & \textbf{9.24}& \underline{8.16} & \\
    \cmidrule[\heavyrulewidth]{1-4}
    \end{tabular}}
    \vspace{-0.4cm}
\end{table}

\vspace{-0.4cm}
\subsection{Performance of Different ACO Algorithms on CVRP under Black-box Settings}
Following the setup in the prior study \citep {ye2024reevo}, we assess the performance of different HG methods under black-box conditions, where no information about the COPs is provided to the LLM. We adopt GLM-4-Plus as the LLM and task it with deriving heuristic measures for ACO to solve CVRP. The experimental results demonstrate that Hercules outperforms the other HG methods. In addition, it is worth noting that Hercules performs well even when using a small-scale LLM (Qwen2.5-14B, distilled from DeepSeek-R1).
\label{blackbox}
\begin{table}[!t]
    \centering
    \small
    \caption{Performance comparison of different ACO algorithms on CVRP under black-box and white-box settings}
    \vspace{-0.4cm}
   \resizebox{0.4\textwidth}{!}{
    \begin{tabular}{c|c|cc}
    \toprule
     \multirow{2}{*}{Algorithm} & \multirow{2}{*}{LLM}&  \multicolumn{2}{c}{Gain (\%), $n$ = 50}   \\
     & & Black-box  & White-box     \\
     \midrule 
     ACO+Random  & GLM-4-Plus & 29.60 & 47.65  \\ 
     ACO+EoH (ICML'24) & GLM-4-Plus  & \underline{41.33} & \underline{48.43}  \\
     ACO+ReEvo (NeurIPS'24) & GLM-4-Plus  & 34.35 & 45.48   \\
     ACO+Hercules-p (ours) & GLM-4-Plus  &38.50 &  47.49  \\
     ACO+Hercules (ours) & GLM-4-Plus &\textbf{41.70 } & \textbf{48.92 }  \\
     ACO+Hercules (ours) & Qwen2.5-14B  &- & 44.99  \\
    \bottomrule
    \end{tabular}}
    \vspace{-0.4cm}
\end{table}

\vspace{-0.2cm}
\section{Search Directions Produced by RP and CAP}
\label{appendixB}
In this section, we present additional search directions produced by RP \citep{ye2024reevo} and CAP (our method) across various HG tasks, COPs, and LLMs. Additionally, all produced unspecific search directions are highlighted in blue. For example, GPT-4o-mini frequently suggests the term ``edge clustering", when performing RP. This direction "edge clustering" is frequently applied in tasks like recommender systems, where it helps identify patterns in user interactions and preferences. However, it is not commonly used in heuristic algorithms for solving COPs and is, therefore, considered unspecific.

\lstdefinestyle{directionstyle}{
    backgroundcolor=\color{gray!10},  
    basicstyle=\ttfamily\tiny,       
    frame=single,                     
    framerule=0.5pt,                  
    rulecolor=\color{black!30},       
    breaklines=true,                  
    breakatwhitespace=true,           
    showstringspaces=false,           
    columns=fixed,                 
    keepspaces=true
    captionpos=t,  
    emph={RP, CAP},               
    emphstyle=\bfseries\color{purple},       
    morekeywords={edge_clustering, symmetrize, route_clustering, the_opposite,symmetric_redundancy,successful_TSP_solutions,multi-vehicle_interactions,current_solution_state,edge_connectivity,experiment,
    historical_edge_frequencies,future_score,demand_forecasting,vehicle_utilization_metrics},  
    keywordstyle=\bfseries\color{blue},          
    abovecaptionskip=1em,             
    belowcaptionskip=1em,             
    numberstyle=\tiny\color{gray},    
}
\vspace{-0.4cm}
\renewcommand{\lstlistingname}{Direction}
\crefname{listing}{Direction}{Directions}
\begin{figure}[H] 
\begin{lstlisting}[caption={The produced search directions for deriving penalty heuristics to solve TSP},  label={lst:d1}, style=directionstyle]
# The LLM used to provide search directions is GPT-4o-mini.
RP:
Consider edge_clustering, incorporate historical_edge_frequencies, and adapt penalties dynamically based on current path exploration.

CAP:
Focus on relative edge scoring, incorporate multiple factors like connectivity and distance, and enhance normalization techniques.

# The LLM used to provide search directions is Llama-3-70b.
RP:
Normalize and symmetrize heuristics; consider the_opposite (not including an edge) for more effective penalties.

CAP:
Focus on relative edge costs (e.g., proximity concept) rather than absolute deviations from average distance.
\end{lstlisting}
\end{figure}

\begin{lstlisting}[caption={The produced search directions for deriving ACO heuristic measures to solve BPP},  label={lst:d3}, style=directionstyle]
# The LLM used to provide search directions is Llama3.1-405b.
RP:
Consider non-linear relationships between demand ratios and heuristics, and experiment with different sparsification thresholds for better performance.

CAP:
Simplification and normalization of demand values can lead to more effective heuristics, reducing computational complexity.
\end{lstlisting}

\begin{lstlisting}[caption={The produced search directions for reshaping attention scores of POMO to solve CVRP},  label={lst:d6}, style=directionstyle]
# The LLM used to provide search directions is GPT-4o-mini.
RP:
Incorporate route_clustering, demand distribution analysis, and consider multi-vehicle interactions for enhanced heuristics.

CAP:
Emphasize vectorization over loops for performance. Enhance demand penalties to better reflect capacity constraints. Normalize distances effectively to balance contributions.  
\end{lstlisting}
\begin{lstlisting}[caption={The produced search directions for reshaping attention scores of LEHD to solve TSP},  label={lst:d7}, style=directionstyle]
# The LLM used to provide search directions is GPT-4o-mini.
RP:
Incorporate edge_connectivity to prioritize clusters. Consider spatial locality using coordinates for refinement. Adaptively adjust weights based on current_solution_state.

CAP:
Use logarithmic scaling for distances, increase top-K selection, and implement normalization for better convergence and stability.
\end{lstlisting}

\section{Heuristics Derived by EoH and Hercules}
\label{appendixF}
In this section, we present the final derived heuristics for solving BPP derived by EoH and Hercules, repsectively. It is evident that, when Llama3.1-405b is adopted, EoH fails to derive intricate heuristics, which accounts for its poor performance in solving BPP (see Table~\ref{tabaco}).

\lstdefinestyle{eoh}{
    basicstyle=\tiny\ttfamily, 
    breaklines=true, 
    morekeywords={def, if, return,in, range,for,torch,log,mean,abs,heuristic,np,ndarray,zeros_like,Tensor,clone,unsqueeze,shape,
    sign, float,clamp,eye,tensor,device,median,largest,values,dim,keepdim,axis,topk,k,
    sum,int,zeros,abs,max,maximum,percentile,where,power,set,len,min,std,key,bool,inf,
    expand_dims,exp,outer,add},
    commentstyle=\itshape\color{gray},        
    keywordstyle=\bfseries\color{blue},       
    stringstyle=\color{teal},                 
    columns=flexible,
    captionpos=t
}

\renewcommand{\lstlistingname}{EoH}
\setcounter{lstlisting}{0}
\crefname{listing}{EoH}{EoH}

\begin{lstlisting}[caption={The ACO heuristic measure produced by EOH using Llama3.1-405b for solving BPP.},  label={lst: heuristic for EOHACO}, language=Python, style=eoh]

def EoH_1(demand: np.ndarray, capacity: int) -> np.ndarray:
    demand_ratio = demand / capacity
    return np.tile(np.power(demand_ratio, 2), (demand.shape[0], 1)) * (1 - demand_ratio[:, np.newaxis])    
\end{lstlisting}

However, despite using the same LLM, Hercules is able to generate more intricate heuristics, one of which is presented as follows:

\lstdefinestyle{heuristicstyle}{
    basicstyle=\tiny\ttfamily, 
    breaklines=true, 
    morekeywords={def, if, return,in, range,for,torch,log,mean,abs,heuristic,np,ndarray,zeros_like,Tensor,clone,unsqueeze,shape,
    sign, float,clamp,eye,tensor,device,median,largest,values,dim,keepdim,axis,topk,k,
    sum,int,zeros,abs,max,maximum,percentile,where,power,set,len,min,std,key,bool,inf,
    expand_dims,exp,outer,add},
    commentstyle=\itshape\color{gray},        
    keywordstyle=\bfseries\color{blue},       
    stringstyle=\color{teal},                 
    columns=flexible,
    captionpos=t
}

\renewcommand{\lstlistingname}{Heuristic}
\setcounter{lstlisting}{0}
\crefname{listing}{Heuristic}{Heuristic}

\begin{lstlisting}[caption={The ACO heuristic measure produced by Hercules using Llama3.1-405b for solving BPP.},  label={lst: heuristic for BPPACO}, language=Python, style=heuristicstyle]
def heuristic(demand: np.ndarray, capacity: int) -> np.ndarray:
    """
    This function calculates the heuristics for the Bin Packing Problem (BPP).
    Parameters:
    demand (np.ndarray): A 1D array representing the sizes of the items.
    capacity (int): The capacity of each bin.
    Returns:
    np.ndarray: A 2D array where heuristics[i][j] represents how promising it is to put item i and item j in the same bin.
    """
    # Calculate the complementarity of each pair of items
    # The complementarity is the difference between the capacity and the sum of the demands of the two items
    complementarity = capacity - np.add.outer(demand, demand)
    # Apply exponential decay to the complementarity values
    # This reduces the dominance of large values and emphasizes the importance of small values
    decayed_complementarity = np.exp(-complementarity / capacity)
    # Normalize the demand values to be between 0 and 1
    normalized_demand = demand / demand.max()
    # Calculate the heuristic value for each pair of items
    # The heuristic value is the product of the normalized demands and the decayed complementarity
    heuristics = np.outer(normalized_demand, normalized_demand) * decayed_complementarity
    # Sparsify the matrix by setting unpromising elements to zero
    # Here, we consider elements with a value less than 0.5 as unpromising
    heuristics[heuristics < 0.5] = 0
    
    return heuristics

\end{lstlisting}

\end{document}